
\documentclass[10pt]{article} 

\usepackage[accepted]{rlj} 

%
%

\usepackage{amssymb}            
\usepackage{mathtools}          
\usepackage{mathrsfs}           
\usepackage{graphicx}           
\usepackage{subcaption}         
\usepackage[space]{grffile}     
\usepackage{url}                
\usepackage{lipsum}             

\usepackage{amsthm}
\usepackage{thmtools,thm-restate}
\DeclareMathOperator*{\argmax}{arg\,max}

\newtheorem{corollary}{Corollary}


\title{Provable Distributional Value Iteration under Partial Observability}

\setrunningtitle{Provable Distributional Value Iteration under Partial Observability}


\author{Larry Preuett\textsuperscript{1}, Qiuyi Zhang\textsuperscript{2}, Muhammad Aurangzeb Ahmad\textsuperscript{3}}


\emails{lpreuett@uw.edu, qiuyiz@google.com, maahmad@uw.edu}

\affiliations{
$^{1}$\textbf{University of Washington, Tacoma, USA}\\
$^{2}$\textbf{Google DeepMind, California, USA}\\
$^{3}$\textbf{University of Washington, Bothell, USA}\\
}

\contribution{
    We prove that both distributional partially observable evaluation $\widetilde{\mathcal{T}}_{PO}$ and optimality $\widetilde{\mathcal{T}}^*_{PO}$ operators are $\gamma$-contractions under the supremum p-Wasserstein metric.
}{
    This extends prior DistRL results for fully observable MDPs \citep{dist_rl_book} to POMDPs, ensuring convergence and a unique fixed point in the risk-neutral setting.
}

\contribution{
    We introduce $\psi$-vectors, distributional analogs of $\alpha$-vectors, and prove that a finite set of these $\psi$-vectors suffices to represent the optimal distributional value function under risk-neutral objectives.
}{
    This parallels the classical result that a finite set of $\alpha$-vectors captures the optimal value function in POMDPs \citep{sondik_infinite_horizon}. Here, $\psi$-vectors yield a finite PWLC representation of the optimal distributional value function under belief-linear mixture in the supremum $p$-Wasserstein metric space, ensuring tractability.
}

\contribution{
    We introduce Distributional Point-Based Value Iteration (DPBVI), which replaces the classical POMDP operator with a distributional partially observable Bellman optimality operator and substitutes $\alpha$-vectors with categorical approximations of $\psi$-vectors ($\hat{\psi}$-vectors) to learn a finite representation of the optimal distributional value function under risk-neutral objectives.
}{
    By leveraging $\hat{\psi}$-vectors in lieu of $\alpha$-vectors, DPBVI preserves a finite PWLC representation of the optimal distributional value function under belief-linear mixture while retaining the point-based backup structure of PBVI \citep{pbvi}. This extension is the first to unify partial observability and DistRL in a computationally tractable manner.
}

\contribution{
    Empirically, we evaluate DPBVI and confirm that it reproduces PBVI’s classical solution under risk-neutral objectives while additionally learning return distributions that match the true environment return distributions via first-visit Monte Carlo (FVMC) estimates.
}{
    DPBVI reproduces PBVI’s classical solution by taking the expectation of each component $\psi^s$ of a $\hat{\psi}$-vector, yielding a corresponding $\alpha$-vector. This establishes a direct bridge between distributional and classical POMDP methods.
}

\keywords{Reinforcement Learning, Partial Observability, Distributional Reinforcement Learning, POMDP, Planning}

\summary{In many real-world planning tasks, agents must tackle uncertainty about the environment’s state and variability in the outcomes induced by stochastic dynamics and rewards. Motivated by recent progress in world model approaches---where latent models approximate beliefs and support planning---we extend Distributional Reinforcement Learning (DistRL), which models the entire return distribution for fully observable domains, to Partially Observable Markov Decision Processes (POMDPs). Concretely, we introduce new distributional Bellman operators for partial observability and prove their convergence under the supremum $p$-Wasserstein metric. We also propose a finite representation of these return distributions via $\psi$-vectors, generalizing the classical $\alpha$-vectors in POMDP solvers. Building on this, we develop Distributional Point-Based Value Iteration (DPBVI), which integrates $\psi$-vectors into a standard point-based backup procedure—\emph{bridging DistRL and POMDP planning}. Our experiments demonstrate that DPBVI recovers classical Point-Based Value Iteration (PBVI) in the risk-neutral case, validating the distributional extension.}

\begin{document}

\maketitle  

\begin{abstract}
In many real-world planning tasks, agents must tackle uncertainty about the environment’s state and variability in the outcomes induced by stochastic dynamics and rewards. Motivated by recent progress in world model approaches---where latent models approximate beliefs and support planning---we extend Distributional Reinforcement Learning (DistRL), which models the entire return distribution for fully observable domains, to Partially Observable Markov Decision Processes (POMDPs). Concretely, we introduce new distributional Bellman operators for partial observability and prove their convergence under the supremum $p$-Wasserstein metric. We also propose a finite representation of these return distributions via $\psi$-vectors, generalizing the classical $\alpha$-vectors in POMDP solvers. Building on this, we develop Distributional Point-Based Value Iteration (DPBVI), which integrates $\psi$-vectors into a standard point-based backup procedure—\emph{bridging DistRL and POMDP planning}. Our experiments demonstrate that DPBVI recovers classical Point-Based Value Iteration (PBVI) in the risk-neutral case, validating the distributional extension.
\end{abstract}

\section{Introduction}

Reinforcement Learning (RL) traditionally maximizes expected returns, treating each future reward distribution as a scalar. However, growing evidence suggests that modeling the full return distribution, rather than just its mean, can yield robust policies, better exploration, and richer theoretical insights \citep{dist_rl_book}. This distributional perspective, referred to as Distributional Reinforcement Learning (DistRL), captures variability and higher-order statistics of returns.

While DistRL has been well-studied in fully observable Markov Decision Processes (MDPs), many real-world systems are partially observable \citep{nudgerank,ai_clinician,drone_ai_application}, forcing the agent to infer the latent state from noisy or incomplete observations. Such problems are naturally framed as Partially Observable Markov Decision Processes (POMDPs), but the literature on distributional methods in these settings remains sparse.

Recently, deep model-based planning methods---often referred to as world models---have shown impressive success \citep{dreamerv3,storm}. These models jointly learn dynamics and a latent belief state. We are further motivated by the success of deep DistRL methods such as C51, IQN, QR-DQN \citep{c51,implicit_quantile_networks_dabney,quantile_regression_dabney}.

In this work, we present a formal extension of DistRL to POMDPs, bridging the gap between distributional theory and partial observability. Concretely, we make three major contributions:
\begin{itemize}
    \item \textbf{Formal Extension of DistRL to POMDPs.}
    We define the partially observable distributional evaluation operator $\widetilde{\mathcal{T}}_{PO}$ and its optimality operator $\widetilde{\mathcal{T}}^*_{PO}$, showing that both are $\gamma$-contractions under the supremum $p$-Wasserstein metric ($1 \le p < \infty$). This result generalizes classical DistRL convergence results to the partially observable setting.
        \item \textbf{Finite Representation via $\psi$-Vectors.}
    We introduce $\psi$-vectors, a distributional analog to the well-known $\alpha$-vectors in POMDP theory. In the risk-neutral regime, we show that a finite set of $\psi$-vectors suffices to represent the optimal distributional value function while preserving the piecewise linear and convex (PWLC) property in the Wasserstein space.
    \item \textbf{Distributional Point-Based Value Iteration (DPBVI).}
    We adapt Point-Based Value Iteration (PBVI) to the distributional setting, yielding DPBVI, which uses $\psi$-vectors instead of $\alpha$-vectors. Under risk-neutral objectives, DPBVI converges to the same policy as PBVI yet lays the groundwork for risk-sensitive extensions by maintaining full return distributions.
\end{itemize}


Alongside these theoretical results, we release our source code\footnote{Source code available at \emph{redacted for double-blind review}.} to facilitate further exploration of distributional approaches in partially observable domains. To our knowledge, this is the first work to establish DistRL theory in partially observable domains.

\section{Related Work}

\paragraph{POMDP Planning}

The extension of MDPs to partially observable settings was introduced by \citet{pomdp_def_astrom}, who showed that beliefs are sufficient planning statistics and that any POMDP can be reformulated as a belief MDP. Sondik's thesis \citet{sondik_phd} later established that optimal control is achievable in both finite and discounted infinite-horizon settings, introducing $\alpha$-vectors and proving that the optimal value function is piecewise linear and convex (PWLC), and thus representable by a finite set of $\alpha$-vectors. Despite this structure, POMDP planning remains intractable: planning for the entire belief space is PSPACE-complete in the finite-horizon case and undecidable in the discounted infinite-horizon case \citep{pomdp_pspace_complete}. This has motivated approximate solvers that restrict planning to reachable beliefs, such as Point-Based Value Iteration (PBVI) \citep{pbvi}. Our work builds on this line by extending point-based methods to the distributional setting, where planning is guided by return distributions rather than scalar expectations.

\paragraph{DistRL} 

Bellemare et al.\ introduced DistRL in \citet{c51}, establishing contraction results and demonstrating state-of-the-art performance on Atari 2600 benchmarks. Follow-up work analyzed categorical \citep{rowland_cat_dist_analysis} and quantile \citep{quantile_regression_dabney} approximations, and later proposed implicit quantile networks (IQN) to approximate the continuous quantile function \citep{implicit_quantile_networks_dabney}. This line of research has since been consolidated into a unified theoretical framework \citep{dist_rl_book}. While prior work has focused on fully observable MDPs, little attention has been given to partially observable settings. In this paper, we establish contraction and finite-representation results for distributional planning in POMDPs. This represents the first extension of DistRL beyond fully observable MDPs to the POMDP setting.

\section{Setting}

We consider a finite, discrete-time Partially Observable Markov Decision Process (POMDP) $\mathcal{P}$. POMDPs are defined as a tuple: $\langle \mathcal{S, A, O}, T, \Omega, b_0, P_R, \gamma \rangle$, where $\mathcal{S}$ is a set of discrete states, $\mathcal{A}$ is a set of discrete actions, $\mathcal{O}$ is the discrete space of noisy and/or incomplete state information, $T(s, a, s') = P(s_{t+1} = s' \mid s_t = s, a_t = a)$ is the state transition model, $\Omega(o', s', a) = P(o_{t+1} = o' \mid s_{t+1} = s', a_t = a)$ is the sensor model, $b_0 \in \Delta$ is the initial belief, $P_R(\cdot \mid s, a)$ is the conditional distribution of the immediate reward $r$ when taking action $a$ in state $s$, and $0 \le \gamma < 1$ is the discount factor. We denote the expected immediate reward by $\mathcal{R}(s, a) = \mathbb{E}_{R \sim P_R(\cdot \mid s, a)}[R]$.

In this setting, the agent is unable to directly observe the state of the environment and must rely on a developed belief state $b \in \Delta$, a probability distribution across $\mathcal{S}$. The belief state serves as sufficient information for the agent to behave optimally \citep{pomdp_def_astrom,pomdp_to_belief_mdp}. The agent's belief is developed using the sequence of observations $b_t = P(s_t \mid b_0, a_0, o_1, \dots, o_{t-1}, a_{t-1}, o_t)$ by

\begin{equation}
    \begin{aligned}
        b_t(s_t) & = \tau(b_{t-1}, a, o) \\
        & = \dfrac{\Omega(o_t, s_t, a_{t-1})\sum_{s_{t-1} \in \mathcal{S}}T(s_{t-1}, a_{t-1}, s_t)b_{t-1}(s_{t-1})}{\sum_{s_t \in \mathcal{S}}\Omega(o_t, s_t, a_{t-1})\sum_{s_{t-1} \in \mathcal{S}}T(s_{t-1}, a_{t-1}, s_t)b_{t-1}(s_{t-1})}
    \end{aligned}
\end{equation}

In POMDPs, the objective is to learn an optimal policy $\pi^*(a \mid b)$ which maximizes the expected return $\mathbb{E}\left[ \sum_{t=0}^T \gamma^t \mathcal{R}(s_t, a_t) \right]$ for time horizon $T \in \mathbb{N}^+ \cup \{ \infty \} $. Policies in this setting may be viewed as conditional plans and are typically learned by learning the optimal value function 

\begin{equation}
    V^*(b) = \max_{a \in \mathcal{A}}\left[\sum_{s \in \mathcal{S}} \mathcal{R}(s, a)b(s) + \gamma \sum_{o' \in \mathcal{O}} \sum_{s' \in \mathcal{S}} \Omega(o', s', a) \sum_{s \in \mathcal{S}}T(s, a, s')b(s)V^*(b') \right]
\end{equation}

The $n$-th horizon value function is comprised of a set of $\alpha$-vectors $V_n = \{ \alpha_0, \alpha_1, \dots, \alpha_m \}$ and is piecewise linear and convex (PWLC) \citep{sondik_infinite_horizon}. Each $\alpha$-vector is a $|\mathcal{S}|$-dimensional hyperplane and defines the value function for some bounded region of $\Delta$ (i.e., $\max_{\alpha \in V} \alpha \cdot b$). At each planning step, the next value function $V_n$ may be computed from the previous value function $V_{n-1}$ via the Bellman optimality operator $\mathcal{T}_{PO}$

\begin{equation}
    \begin{aligned}
        V_n(b) & = \mathcal{T}_{PO}V_{n-1} \\
        & = \max_{a \in \mathcal{A}}\left[ \sum_{s \in \mathcal{S}} \mathcal{R}(s, a)b(s) + \gamma \sum_{o' \in \mathcal{O}} \max_{\alpha \in V_{n-1}} \sum_{s' \in \mathcal{S}} \Omega(o', s', a) \sum_{s \in \mathcal{S}}T(s, a, s')b(s)\alpha(s') \right]
    \end{aligned}
\end{equation}

\subsection{Point-Based Value Iteration}

PBVI computes $V_t = \mathcal{T}_{PO}V_{t-1}$ in three steps. First, it creates projections for each action and observation
\begin{equation}
    \begin{aligned}
        & \Gamma^{a,*} \leftarrow \alpha^{a,*}(s) = \mathcal{R}(s, a) \\
        & \Gamma^{a,o'} \leftarrow \alpha_i^{a,o'}(s) = \gamma \sum_{s' \in \mathcal{S}} T(s, a, s') \Omega(o', s', a) \alpha_i^{t-1}(s'), \forall \alpha_i^{t-1} \in V_{t-1}    
    \end{aligned}
\end{equation}
Next, the value of action $a$ at belief $b$ is computed by

\begin{equation}
    \begin{aligned}
        & \Gamma_b^a = \Gamma^{a,*} + \sum_{o' \in \mathcal{O}} \argmax_{\alpha \in \Gamma^{a,o'}}(\alpha \cdot b)
    \end{aligned}
\end{equation}
Lastly, the best action at each belief is used to construct $V_t$

\begin{equation}
    \begin{aligned}
        V_t \leftarrow \argmax_{\Gamma_b^a, \forall a \in \mathcal{A}}(\Gamma_b^a \cdot b), \forall b \in \mathcal{B}
    \end{aligned}
\end{equation}
$V_t$ is comprised of at most $|\mathcal{B}|$ $\alpha$-vectors, thus requiring $|\mathcal{S}||\mathcal{A}||V_{t-1}||\mathcal{O}||\mathcal{B}|$ operations per backup. 

\subsection{Distributional Reinforcement Learning}

Traditionally, the value function models the return expectation. DistRL, instead, aims to learn the compound distribution of the returns $Z^\pi(s)$ sourced to randomness in (1) reward $R$ (2) transition $P^\pi$, and (3) distribution of the next-state value $Z(S')$. Let $\widetilde{\mathcal{T}}$ be the distributional Bellman operator such that

\begin{equation}
    \begin{aligned}
        \widetilde{\mathcal{T}}^\pi Z(s) \overset{D}{=} R + \gamma P^\pi Z(s)
    \end{aligned}
\end{equation}
where $\overset{D}{=}$ denotes equality in distribution, $R$ is the reward random variable, and $P^\pi Z(s) \overset{D}{=} Z(S')$ for $S' \sim T(s, a, \cdot)$.

The distributional Bellman operator has been shown to be a $\gamma$-contraction under the supremum $p$-Wasserstein metric space $\forall p \in [1, \infty]$ \citep{dist_rl_book}. The distributional Bellman optimality operator $\widetilde{\mathcal{T}}^*$, which selects actions that maximize the expected return, similarly converges to the fixed point under the supremum $p$-Wasserstein metric space $\forall p \in [1, \infty]$ \citep{dist_rl_book} if there is a unique optimal policy and a mean-preserving distribution representation $\mathcal{F}$ is used. In the case of the existence of multiple optimal policies, however, convergence isn't guaranteed because differing distributions can be similar in expectation. 

\section{Distributional Reinforcement Learning Under Partial Observability}

In the presence of partial observability we must account for additional sources of randomness; namely, randomness in (a)\ the unobservable state $\mathcal{S}$ and (b)\ sensing $\Omega$. Therefore, the return distribution is learned with respect to belief instead of state (i.e., $Z(b)$). 

\subsection{The Partially Observable Distributional Bellman Operators}

Let $\widetilde{\mathcal{T}}_{PO}$ be the partially observable distributional Bellman operator such that

\begin{equation}
    \begin{aligned}
        & \widetilde{\mathcal{T}}^\pi_{PO}Z(b) \overset{D}{=} R + \gamma \tau^\pi Z(b)
    \end{aligned}
\end{equation}
where $\tau^\pi Z(b) \overset{D}{=} Z(B')$ for $S \sim b,\ A \sim \pi(\cdot \mid b),\ S' \sim T(\cdot \mid S, A),\ O \sim \Omega(\cdot \mid S', A),\ B' \sim \tau(b, A, O)$. The partially observable distributional Bellman optimality operator $\widetilde{\mathcal{T}}^*_{PO}$ may be similarly defined, but where actions are selected to maximize the expected return. The distributional Bellman operators in the partially observable setting share convergence properties with those of the fully observable setting, because any POMDP can be rewritten as a belief MDP, where the state of the belief MDP is the belief of the POMDP \citep{pomdp_to_belief_mdp}.

We denote by $\mathfrak{P}_p(\mathbb{R})$ the set of Borel probability measures on $\mathbb{R}$ with finite $p$-th moment, ensuring that the $p$-Wasserstein distance is well-defined. The space $\mathfrak{P}_p(\mathbb{R})^\Delta$ then denotes the set of functions $\eta\ : \Delta \rightarrow \mathfrak{P}_p(\mathbb{R})$, where $\Delta$ is the belief space of the POMDP.

\begin{restatable}{theorem}{evalOpTheorem}
\label{thrm:pod_operator}
Let $p \in [1, \infty)$ and $\gamma \in [0, 1)$. Consider a POMDP $\mathcal{P}$ in which each reward distribution $P_R(\cdot \mid s, a)$ belongs to $\mathfrak{P}_p(\mathbb{R})$ for all $(s, a) \in \mathcal{S} \times \mathcal{A}$. Then the partially observable distributional Bellman operator $\widetilde{\mathcal{T}}^\pi_{PO}$ is a $\gamma$-contraction on $\mathfrak{P}_p(\mathbb{R})^\Delta$ in the supremum $p$-Wasserstein distance $\Bar{w}_p$. That is,
$$ \Bar{w}_p(\widetilde{\mathcal{T}}^\pi_{PO} \eta, \widetilde{\mathcal{T}}^\pi_{PO} \eta') \leq \gamma \Bar{w}_p(\eta, \eta'),$$
$\forall \eta, \eta' \in \mathfrak{P}_p(\mathbb{R})^\Delta$.
\end{restatable}

Theorem~\ref{thrm:pod_operator} follows from the standard $\gamma$-contraction properties of $\widetilde{\mathcal{T}}^\pi$. Because $\widetilde{\mathcal{T}}^\pi_{PO}$ is a contraction mapping, by the Banach Fixed-Point Theorem, there exists a unique $\eta^* \in \mathfrak{P}_p(\mathbb{R})^\Delta$ such that $\eta^*$ is a fixed point. \noindent\textit{See Appendix~\ref{appendix_sec:pod_operator_proof} for proof.}

\begin{restatable}{theorem}{optOpTheorem}
    \label{thrm:pod_optimality_operator}
    Let \(p \in [1,\infty)\) and let \(\widetilde{\mathcal{T}}^*_{PO}\) be the partially observable distributional Bellman optimality operator for a POMDP \(\mathcal{P}\). Assume that for every $(s, a) \in \mathcal{S} \times \mathcal{A}$, the reward distribution $P_R(\cdot \mid s, a)$ belongs to $\mathfrak{P}_p(\mathbb{R})$. Suppose there is a unique optimal policy \(\pi^*\). Then, for any initial return-distribution function \(\eta_0 \in \mathfrak{P}_p(\mathbb{R})^\Delta\), the sequence of iterates
    \[
      \eta_{k+1} \;=\; \widetilde{\mathcal{T}}^*_{PO}\,\eta_{k}
    \]
    converges in the supremum \(p\)-Wasserstein distance \(\overline{w}_p\) to \(\eta^{\pi^*}\), the return distribution associated with the unique optimal policy.
\end{restatable}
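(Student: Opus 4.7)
The plan is to reduce the POMDP to its associated belief-MDP and then combine Theorem~\ref{thrm:pod_operator} with a standard gap-and-greedy argument from DistRL. Since Theorem~\ref{thrm:pod_operator} already establishes that the policy-evaluation operator $\widetilde{\mathcal{T}}_{PO}^{\pi^*}$ is a $\gamma$-contraction on $\mathfrak{P}_p(\mathbb{R})^\Delta$, the target $\eta^{\pi^*}$ is its unique fixed point. The task is therefore to show that iterating $\widetilde{\mathcal{T}}_{PO}^*$ eventually behaves like iterating $\widetilde{\mathcal{T}}_{PO}^{\pi^*}$, after which contraction takes over.

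First, I would track the expectations $V_k(b) := \mathbb{E}_{Z \sim \eta_k(b)}[Z]$. Because $\widetilde{\mathcal{T}}_{PO}^*$ greedifies with respect to the mean and expectation commutes with the affine distributional update $Z \mapsto R + \gamma Z$, applying the expectation functional gives exactly the classical POMDP Bellman optimality update $V_{k+1} = \mathcal{T}_{PO} V_k$. Standard contraction arguments for $\mathcal{T}_{PO}$ then yield $\|V_k - V^*\|_\infty \le \gamma^k \|V_0 - V^*\|_\infty$, so $V_k \to V^*$ uniformly on $\Delta$. This is precisely where the mean-preserving assumption is needed: it guarantees that tracking the distribution is consistent with tracking the mean.

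Next, I would invoke uniqueness of $\pi^*$ to obtain a uniform action gap: for every $b \in \Delta$ and every $a \ne \pi^*(b)$, $Q^*(b,\pi^*(b)) - Q^*(b,a) \ge \delta > 0$. Once $V_k$ is within $\delta/(2 + 2\gamma)$ of $V^*$ in sup-norm, the $Q$-values induced by $\eta_k$ are uniformly within $\delta/2$ of $Q^*$, so the greedy selection under $\widetilde{\mathcal{T}}_{PO}^*$ coincides with $\pi^*(b)$ at every $b$. From that iterate $k^*$ onward, $\widetilde{\mathcal{T}}_{PO}^* \eta_k = \widetilde{\mathcal{T}}_{PO}^{\pi^*} \eta_k$, and Theorem~\ref{thrm:pod_operator} yields
\[
\overline{w}_p(\eta_{k+1},\,\eta^{\pi^*}) \;\le\; \gamma\,\overline{w}_p(\eta_k,\,\eta^{\pi^*})
\]
for all $k \ge k^*$, so $\eta_k \to \eta^{\pi^*}$ in $\overline{w}_p$.

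The main obstacle is the uniform gap step. On the continuous simplex $\Delta$ the pointwise gap can degenerate near decision boundaries between $\alpha$-vectors even when $\pi^*$ is unique at every fixed $b$, so a naive compactness argument does not immediately deliver $\delta > 0$. To resolve this I would exploit the PWLC structure of $Q^*$ (each $Q^*(\cdot,a)$ is a finite upper envelope of affine functions on the compact set $\Delta$) together with the uniqueness hypothesis to argue that the infimum of the gap over $\Delta$ is attained and strictly positive; a cleaner alternative is to restrict to a finite belief set $\mathcal{B}$ as in PBVI, where strict uniqueness on a finite set trivially produces $\delta > 0$. The remainder of the argument is a direct transfer of the fully observable DistRL optimality result \citep{dist_rl_book} through the belief-MDP reformulation already used in the discussion preceding Theorem~\ref{thrm:pod_operator}.
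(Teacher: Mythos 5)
Your proposal is correct and follows essentially the same route as the paper's proof: lift to the belief MDP, observe that the means of the iterates evolve under the classical Bellman optimality operator and converge in sup-norm, use uniqueness of $\pi^*$ to obtain a uniform action gap so that the greedy selection locks onto $\pi^*$ after finitely many iterations, and then let the contraction of Theorem~\ref{thrm:pod_operator} under $\widetilde{\mathcal{T}}^{\pi^*}_{PO}$ finish the argument. The only point of divergence is the uniform-gap step, where the paper uses exactly the compactness argument you distrust---since each $Q_\eta(\cdot,a)$ is assumed continuous on the compact simplex $\Delta$, the gap (max minus second max over finitely many actions) is continuous and pointwise positive by uniqueness, so the Extreme Value Theorem attains a strictly positive minimum---making your PWLC detour unnecessary and your finite-$\mathcal{B}$ fallback actually a weaker statement than the theorem claims.
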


\begin{corollary}
    \label{coro:pod_opt_convergence}
    Consider a mean-preserving projection $\Pi_\mathcal{F}$ for some representation $\mathcal{F}$. Suppose $\Pi_\mathcal{F}$ is a nonexpansion in the supremum $p$-Wasserstein distance, and let the finite domain $\mathfrak{P}_p(\mathbb{R})^\Delta$ be closed under $\Pi_\mathcal{F}$. If there is a unique optimal policy $\pi^*$ then, under the conditions of Theorem~\ref{thrm:pod_optimality_operator}, the sequence
    $$\eta_{k+1} = \Pi_\mathcal{F}\widetilde{\mathcal{T}}^*_{PO}\eta_k$$
    converges to the fixed point $\eta^{\pi^*}$.
\end{corollary}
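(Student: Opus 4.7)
The plan is to show that $\Pi_\mathcal{F}\circ\widetilde{\mathcal{T}}^*_{PO}$ inherits the $\gamma$-contraction property from $\widetilde{\mathcal{T}}^*_{PO}$, invoke the Banach Fixed-Point Theorem on the composite operator, and then exploit the mean-preserving property together with uniqueness of $\pi^*$ to identify the resulting fixed point with $\eta^{\pi^*}$.

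First I would compose the two maps. By Theorem~\ref{thrm:pod_optimality_operator}, $\widetilde{\mathcal{T}}^*_{PO}$ is a $\gamma$-contraction in $\Bar{w}_p$, and by hypothesis $\Pi_\mathcal{F}$ is a $\Bar{w}_p$-nonexpansion. Therefore, for any $\eta,\eta'\in\mathfrak{P}_p(\mathbb{R})^\Delta$,
$$\Bar{w}_p\!\left(\Pi_\mathcal{F}\widetilde{\mathcal{T}}^*_{PO}\eta,\;\Pi_\mathcal{F}\widetilde{\mathcal{T}}^*_{PO}\eta'\right)\;\le\;\Bar{w}_p\!\left(\widetilde{\mathcal{T}}^*_{PO}\eta,\;\widetilde{\mathcal{T}}^*_{PO}\eta'\right)\;\le\;\gamma\,\Bar{w}_p(\eta,\eta').$$
Since $\mathfrak{P}_p(\mathbb{R})^\Delta$ is complete under $\Bar{w}_p$ and closed under $\Pi_\mathcal{F}$ by assumption, Banach's Fixed-Point Theorem produces a unique $\hat{\eta}$ to which the iterates $\eta_{k+1}=\Pi_\mathcal{F}\widetilde{\mathcal{T}}^*_{PO}\eta_k$ converge from any initialization $\eta_0$.

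The main obstacle is identifying $\hat{\eta}$ with $\eta^{\pi^*}$. The key observation is that because $\Pi_\mathcal{F}$ preserves expectations pointwise, $\mathbb{E}[\Pi_\mathcal{F}\eta(b)]=\mathbb{E}[\eta(b)]$ for every belief $b$, and because $\widetilde{\mathcal{T}}^*_{PO}$ selects its action via the expected return, the greedy action on $\Pi_\mathcal{F}\eta_k$ coincides with the greedy action on $\eta_k$. Uniqueness of $\pi^*$ then ensures, by an argument parallel to that used in Theorem~\ref{thrm:pod_optimality_operator}, that the expected values of the iterates converge to $V^{\pi^*}$ and that the greedy selection ultimately stabilizes on $\pi^*$. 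Consequently the fixed-point equation collapses to $\hat{\eta}=\Pi_\mathcal{F}\widetilde{\mathcal{T}}^{\pi^*}_{PO}\hat{\eta}$. Because $\widetilde{\mathcal{T}}^{\pi^*}_{PO}$ is a $\gamma$-contraction on the same complete space (Theorem~\ref{thrm:pod_operator}) with unique fixed point $\eta^{\pi^*}$, and because closure of the representation under $\Pi_\mathcal{F}$ together with mean-preservation prevents $\Pi_\mathcal{F}$ from deforming this fixed point, uniqueness of the contraction's fixed point forces $\hat{\eta}=\eta^{\pi^*}$, completing the argument.
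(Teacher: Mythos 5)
Your overall route matches the argument the paper intends (the paper gives no standalone proof of this corollary; it leans on the proof of Theorem~\ref{thrm:pod_optimality_operator} and the projected-operator results of \citet{dist_rl_book}), but two steps need repair. First, your opening claim that Theorem~\ref{thrm:pod_optimality_operator} makes $\widetilde{\mathcal{T}}^*_{PO}$ a $\gamma$-contraction in $\bar{w}_p$ is not what that theorem gives: the distributional optimality operator is in general \emph{not} a Wasserstein contraction (this is exactly why the theorem needs the unique-optimal-policy/action-gap argument, which yields convergence of the iterates rather than contractivity of the operator). Consequently the Banach argument applied to the composite $\Pi_\mathcal{F}\widetilde{\mathcal{T}}^*_{PO}$ is unsupported as stated. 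The sound structure is the one you sketch afterwards: mean preservation implies the greedy action computed from $\Pi_\mathcal{F}\eta_k$ coincides with that computed from $\eta_k$, the means follow the classical Bellman recursion, the action-gap argument forces the greedy selection to equal $\pi^*$ for all $k \geq K$, and from then on the iteration is $\eta_{k+1} = \Pi_\mathcal{F}\widetilde{\mathcal{T}}^{\pi^*}_{PO}\eta_k$, where nonexpansion composed with the Theorem~\ref{thrm:pod_operator} contraction does give a $\gamma$-contraction and hence convergence to the unique fixed point $\hat{\eta}$ of the \emph{projected evaluation} operator.

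The genuine gap is the final identification $\hat{\eta} = \eta^{\pi^*}$. Mean preservation plus closure of the domain under $\Pi_\mathcal{F}$ does not ``prevent $\Pi_\mathcal{F}$ from deforming'' the fixed point: unless $\eta^{\pi^*}$ is itself representable, i.e.\ $\Pi_\mathcal{F}\eta^{\pi^*} = \eta^{\pi^*}$, the fixed point of $\Pi_\mathcal{F}\widetilde{\mathcal{T}}^{\pi^*}_{PO}$ generally differs from $\eta^{\pi^*}$ as a distribution---for the categorical projection one only obtains a bounded Wasserstein distance to $\eta^{\pi^*}$ together with agreement in expectation, so $\mathbb{E}[\hat{\eta}(b)] = V^{\pi^*}(b)$ but not $\hat{\eta} = \eta^{\pi^*}$. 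To close the argument you must either add (or verify for the chosen representation $\mathcal{F}$) the hypothesis that $\eta^{\pi^*}$ is a fixed point of $\Pi_\mathcal{F}$, or weaken the conclusion to convergence to the fixed point of the projected operator, which still recovers the optimal policy and the classical value function in expectation. As written, the sentence asserting that uniqueness of the contraction's fixed point ``forces $\hat{\eta} = \eta^{\pi^*}$'' conflates the fixed point of $\widetilde{\mathcal{T}}^{\pi^*}_{PO}$ with that of $\Pi_\mathcal{F}\widetilde{\mathcal{T}}^{\pi^*}_{PO}$, which are distinct objects.
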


These results build on the convergence guarantees of the distributional Bellman optimality operator in \citep[Theorem~7.9]{dist_rl_book}, which assumes a unique optimal policy and analyzes the tabular MDP setting. Our proof adapts these arguments to the belief MDP by ensuring the required continuity and compactness conditions hold, allowing the result to generalize to partially observable settings (see Appendix~\ref{appendix_sec:pod_optimality_operator_proof}). Bellemare et al.\ note that the unique optimal policy assumption is necessary for contraction guarantees but they empirically observe that algorithms like C51 often perform well even when this condition may not strictly hold \citet{dist_rl_book}.

Theorem~\ref{thrm:pod_optimality_operator} states if there is a unique optimal policy, repeatedly applying \(\widetilde{\mathcal{T}}^*_{PO}\) will converge to the optimal return distribution function $\eta$. Corollary~\ref{coro:pod_opt_convergence} establishes that if we approximate the return distributions after each greedy update, the resulting approximate iteration still converges to the same fixed point, provided there is a unique optimal policy.

\subsection{$\psi$-vectors}
\label{sec:psi_vectors}

In classical POMDP value iteration, each conditional plan---a sequence of actions contingent on future observations---is captured by an $\alpha$-vector, which stores said plan's expected return for each state. Thus, a belief-state's value is found by selecting the $\alpha$-vector with the highest dot product $\alpha \cdot b$. While this approach gives the expected returns of each plan, it does not capture the uncertainty around possible outcomes.

\begin{figure}[h]
    \centering
    \includegraphics[width=0.8\linewidth]{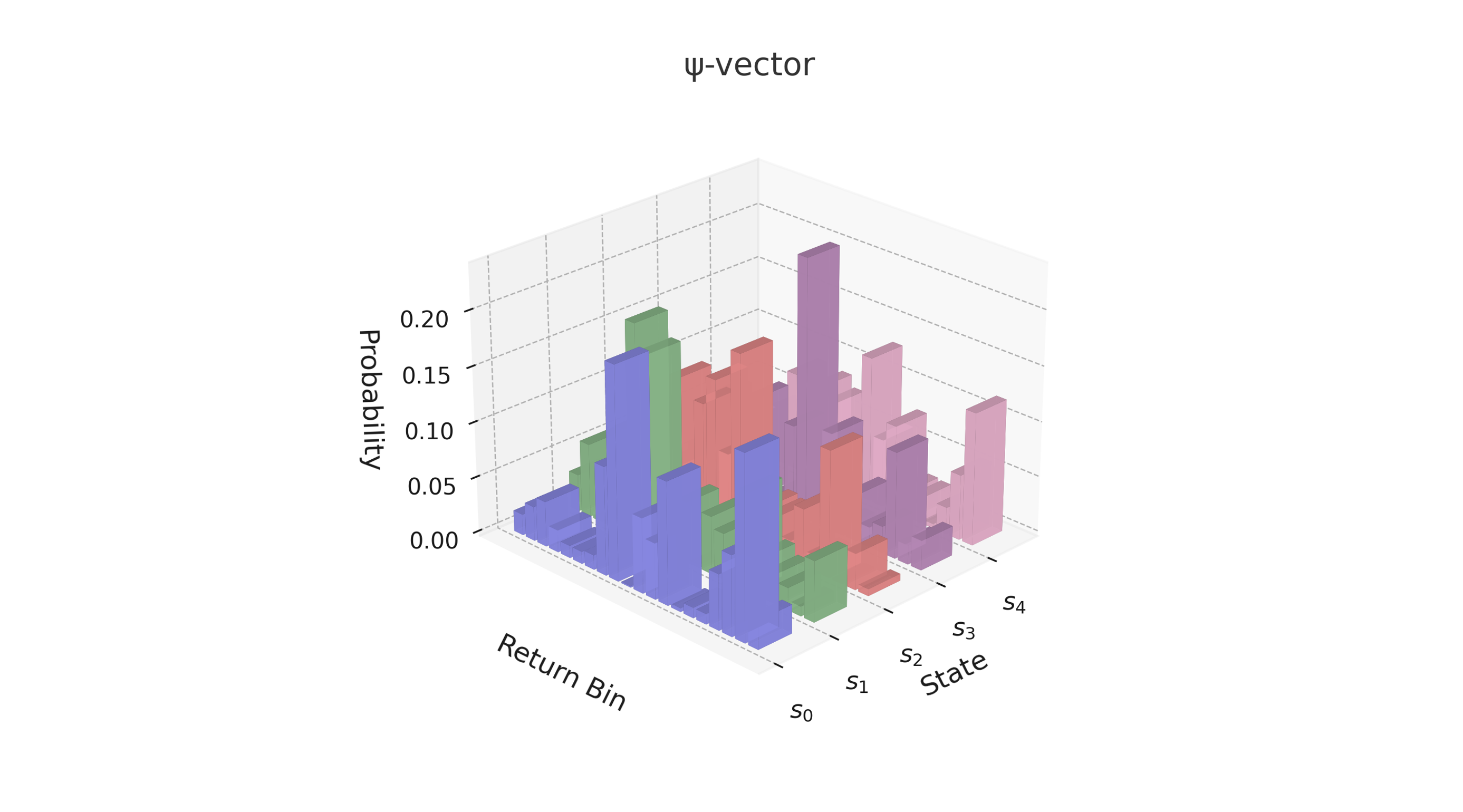}
    \caption{A $\psi$-vector stores a return distribution for each state, generalizing classical $\alpha$-vectors. The figure shows the \emph{categorical} approximation $\hat{\psi}$ used in DPBVI (Section~\ref{sec:dpbvi}); the theoretical framework (Section~\ref{sec:psi_vectors}) uses true return distributions.}
    \label{fig:psi_vector}
\end{figure}

To address this, we adopt a distributional viewpoint: rather than storing a scalar expectation, we store a distribution over returns for each plan. We call these $\psi$-vectors (Figure~\ref{fig:psi_vector}). Formally, a $\psi$-vector is
\begin{equation}
    \Psi \;=\; (\psi^s)_{s \in \mathcal{S}},
\end{equation}
where $\psi^s \in \mathfrak{P}_p(\mathbb{R})$ is a distribution over returns conditioned on being in state $s$. Given a belief $b \in \Delta$, we define the (risk-neutral) expected return of $\Psi$ at $b$ as
\begin{equation}
    \big\langle \Psi, b \big\rangle \;=\; \sum_{s \in \mathcal{S}} b(s) \mathbb{E}[\psi^s]
\end{equation}
Given a set of candidate $\psi$-vectors $\Gamma$, the distributional value function at a belief $b$ is then obtained by first selecting the $\psi$-vector with maximal expected return:
\begin{equation}
    \Psi^*(b) \;=\; \argmax_{\Psi \in \Gamma} \big\langle \Psi, b \big\rangle
\end{equation}
The corresponding return distribution at belief $b$ is then defined as the belief-weighted mixture of the state-conditioned distributions:
\begin{equation}
    Z(b) \;=\; \sum_{s \in \mathcal{S}} b(s) \psi^{*,s},
\end{equation}
where $\psi^{*,s}$ is the $s$th component of $\Psi^*(b)$. 

\begin{restatable}{theorem}{psiPWLC}
    Under the assumptions of Theorem~\ref{thrm:pod_optimality_operator}, the optimal distributional value function $Z^*(b)$ admits a finite piecewise-linear and convex (PWLC) representation under belief-linear mixture with respect to supremum $p$-Wasserstein $\bar{w}_p$. Specifically, there exists a finite set of $\psi$-vectors $\Gamma$ such that for any belief state $b \in \Delta$,
    $$Z^*(b) = \sum_{s \in \mathcal{S}} b(s) \psi^{*,s},$$
    where $\Psi^*(b) \in \Gamma$ is the maximizing $\psi$-vector at belief $b$ and $\psi^{*,s}$ denotes its $s$th component. 
    \label{thrm:risk_neutral_psi_pwlc}
\end{restatable}
\noindent\textit{See Appendix~\ref{appendix_sec:risk_neutral_psi_pwlc} for proof.}

\begin{corollary}
    \label{coro:approx_risk_neutral_psi_pwlc}
    Under the assumptions of Theorem~\ref{thrm:pod_optimality_operator} and Corollary~\ref{coro:pod_opt_convergence}, let $\hat{\Psi}$ denote the image of $\Psi$ under a mean-preserving $\bar{w}_p$-nonexpansive projection $\Pi_{\mathcal{F}}$ applied componentwise to each $\psi^s$. Then the projected optimal distributional value function $\hat{Z}^*(b)$ admits a finite PWLC representation in the supremum $p$-Wasserstein metric, using a finite set of projected $\hat{\psi}$-vectors.
\end{corollary}

By Theorem~\ref{thrm:risk_neutral_psi_pwlc}, $Z^*$ inherits the PWLC structure of classical POMDPs, but within the Wasserstein metric space. A finite set of $\psi$-vectors suffices to represent $Z^*$, directly mirroring the finite $\alpha$-vector representation of $V^*$.

Corollary~\ref{coro:approx_risk_neutral_psi_pwlc} further ensures that any mean-preserving, $\bar{w}_p$-nonexpansive projection preserves this finiteness. That is, the projected optimal value function $\hat{Z}^*$ is still representable by a finite set of projected $\hat{\psi}$-vectors. Thus, tractability is maintained even under practical finite distributional approximations.

Our proofs focus on the discounted infinite-horizon case, where contraction guarantees convergence to a unique fixed point. Since POMDPs admit finite $\alpha$-vector representations in both finite and discounted infinite-horizon settings \citep{sondik_phd}, our results apply to both.

\section{Distributional Point-Based Value Iteration}
\label{sec:dpbvi}

Building upon PBVI and DistRL in the partially observable setting, we now introduce \emph{Distributional Point-Based Value (DPBVI)}. Conceptually, DPBVI is PBVI with $\widetilde{\mathcal{T}}^*_{PO}$ in place of $\mathcal{T}^*_{PO}$. As in standard PBVI, we maintain a finite set of belief points $\mathcal{B} \subset \Delta$, perform point-based backups \citep{pbvi}, and update one vector per belief. The key difference is that DPBVI maintains a finite collection of $\hat{\psi}$-vectors rather than $\alpha$-vectors.

In PBVI, each $\alpha$-vector encodes expected returns for a conditional plan. In DPBVI, each $\hat{\psi}$-vector stores a return distribution for each state, capturing the full distributional return of a conditional plan.

By Corollary~\ref{coro:approx_risk_neutral_psi_pwlc}, any mean-preserving and $\bar{w}_p$-nonexpansive projection admits a finite PWLC representation via projected $\hat{\psi}$-vectors. In practice, we use a categorical parameterization to instantiate $\hat{\psi}$-vectors for computational tractability. This approximation is mean-preserving but is not assumed to satisfy the nonexpansion conditions of Corollaries~\ref{coro:pod_opt_convergence}--\ref{coro:approx_risk_neutral_psi_pwlc}; it is adopted purely as a practical representation of return distributions.

Unlike PBVI, DPBVI assumes stochastic rewards. In practice, we approximate the reward distribution $P_R(\cdot \mid s, a)$ using a categorical representation, denoted the approximation by $\hat{R}(\cdot \mid s, a)$. Note the support of $\hat{R}$ need not match the support used for the $\hat{\psi}$-vector representation.

\subsection{The DPBVI Backup}
Given the previous set of $\hat{\psi}$-vectors $\Gamma^{t-1}$, each DPBVI backup follows four steps: 1.\ action-observation projection, 2.\ categorical distribution projection, 3.\ belief-specific maximization, and 4.\ candidate $\hat{\psi}$-vector selection.

For each action $a \in \mathcal{A}$, observation $o' \in \mathcal{O}$, and $\hat{\Psi} \in \Gamma^{t-1}$, we construct a candidate (un-normalized) $\hat{\psi}$-vector $\hat{\Psi}^t_{a,o'}$ with components $\hat{\psi}^{t,s}_{a,o'}$ for each state $s \in \mathcal{S}$ via
\begin{equation}
    \hat{\psi}^{t,s}_{a,o'} \;=\; \sum_{s' \in \mathcal{S}} T(s, a, s') \Omega(o', s', a) \left( \hat{R}(\cdot \mid s, a) \oplus_\gamma \hat{\psi}^{s'} \right),
\end{equation}
where $\mu \oplus_\gamma \nu$ denotes the distribution of $R + \gamma Z$. In practice, this operation is implemented by:
\begin{itemize}
    \item iterating over reward atoms $r$ with probability $\hat{R}(r \mid s, a)$ and return atoms $z$ with probability $\hat{\psi}^{s'}(z)$,
    \item and adding probability mass $T(s, a, s') \Omega(o', s', a) \hat{R}(r \mid s, a) \hat{\psi}^{s'}(z)$ to the categorical bin corresponding to $r + \gamma z$, after projecting $r + \gamma z$ onto the fixed support using the categorical projection operator $\Pi_c$.
\end{itemize}
Collecting all such candidates over $\hat{\Psi} \in \Gamma^{t-1}$ yields the set $\Gamma^{o',t}_a$ of action-observation $\hat{\psi}$-vectors. This step yields $|\mathcal{A}| \times |\mathcal{O}| \times |\Gamma^{t-1}|$ projected distributions, denoted by $\Gamma^{,t}$. Because each distribution reflects one action-observation pair, it may be \emph{subnormalized} until we later sum over observations.

For each belief $b \in \mathcal{B}$, we select, for every action-observation pair, the $\hat{\psi}$-vector with the highest expected return under $b$. For a fixed action $a$ and observation $o'$, define
\begin{equation}
    \hat{\Psi}^{b,*}_{a,o'} \;=\; \argmax_{\hat{\Psi} \in \Gamma^{o',t}_a} \langle \hat{\Psi}, b \rangle.
\end{equation}
We then aggregate these best per-observation candidates into an action-specific $\hat{\psi}$-vector:
\begin{equation}
    \hat{\Psi}^b_a \;=\; \sum_{o' \in \mathcal{O}} \hat{\Psi}^{b,*}_{a,o'}.
\end{equation}
This mirrors the cross-sum and maximize operation in PBVI: for each action, we keep the best projected $\hat{\psi}$-vector per observation and sum across observations to recover a full next-step return distribution for that action at belief $b$. We store this set of maximizing $\hat{\psi}$-vectors in $\Gamma^{b,t}_a$.

Finally, for each belief $b$, we pick the best action's distribution by
\begin{equation}
    \Psi_b^t
    \;\leftarrow\;
    \argmax_{\Gamma_{a}^{b,t},\, a \in \mathcal{A}}
    \bigl\langle \Gamma_{a}^{b,t}, b \bigr\rangle.
\end{equation}

Collecting $\{\Psi_b^t\}_{b \in \mathcal{B}}$ yields our updated set $\Gamma^t$. As in PBVI, we keep at most $|\mathcal{B}|$ vectors in $\Gamma^t$, thus maintaining a point-based approximation.

To better understand the complexity of the backup, let $|M|$ denote the number of categorical atoms used to represent the return distributions, and let $|\hat{R}|$ denote the number of atoms used to represent the reward distributions. DPBVI adds a convolution over reward and return atoms, increasing the per-backup cost from $\mathcal{O}(|\mathcal{B}||\mathcal{A}||\mathcal{O}||V_{t-1}||\mathcal{S}|^2)$ in PBVI to $\mathcal{O}(|\mathcal{B}||\mathcal{A}||\mathcal{O}||\Gamma^{t-1}||\mathcal{S}|^2|M||\hat{R}|)$. This is the only source of additional computation overhead relative to PBVI.

\section{Experimental Results}
\label{sec:experiments}

Our primary goal in these experiments is to verify that DPBVI learns correct return distributions in the risk-neutral setting. Since PBVI optimizes expected returns, we compare DPBVI and PBVI by evaluating whether the expectation of the learned return distribution $\hat{Z}(b)$ matches the expected value function computed by PBVI at each belief $b$. To further validate distributional correctness, we compare the full return distributions learned by DPBVI against empirical first-visit Monte Carlo (FVMC) return distributions. 

\subsection{Results and Discussion}

Table~\ref{tab:results} reports the runtime and iteration counts for PBVI and DPBVI in both variants of the Two-State Noisy-Sensor environment. In all cases, DPBVI converges to the same risk-neutral solution as PBVI but incurs higher computational cost due to distributional backups.

\begin{table}[t]
\caption{Results of DPBVI and PBVI: average runtime and iteration counts for convergence criterion $\epsilon = 1\mathrm{e}{-3}$.}
\label{tab:results}
\centering
\begin{tabular}{|c|c|c|c|}
    \hline
    \textbf{Environment} & \textbf{Algorithm} & \textbf{Avg. Runtime (s)} & \textbf{\# Iterations} \\ \hline
    Two-State & PBVI & 0.077 & 789 \\ \hline
    & DPBVI & 9.590 & 789 \\ \hline
    Stochastic Two-State & PBVI & 0.075 & 769 \\ \hline
    & DPBVI & 214.451 & 769 \\ \hline
    \end{tabular}
\end{table}

To validate that DPBVI converges to the same value function as PBVI, we examine the maximum relative error between their value functions across belief points over successive backups. Figure~\ref{fig:rel_error} shows results for two convergence thresholds: $\epsilon = 1\mathrm{e}{-3}$ and $\epsilon = 1\mathrm{e}{-6}$.

\begin{figure}[t]
    \centering
    \begin{subfigure}[b]{0.45\textwidth}
        \includegraphics[width=\textwidth]{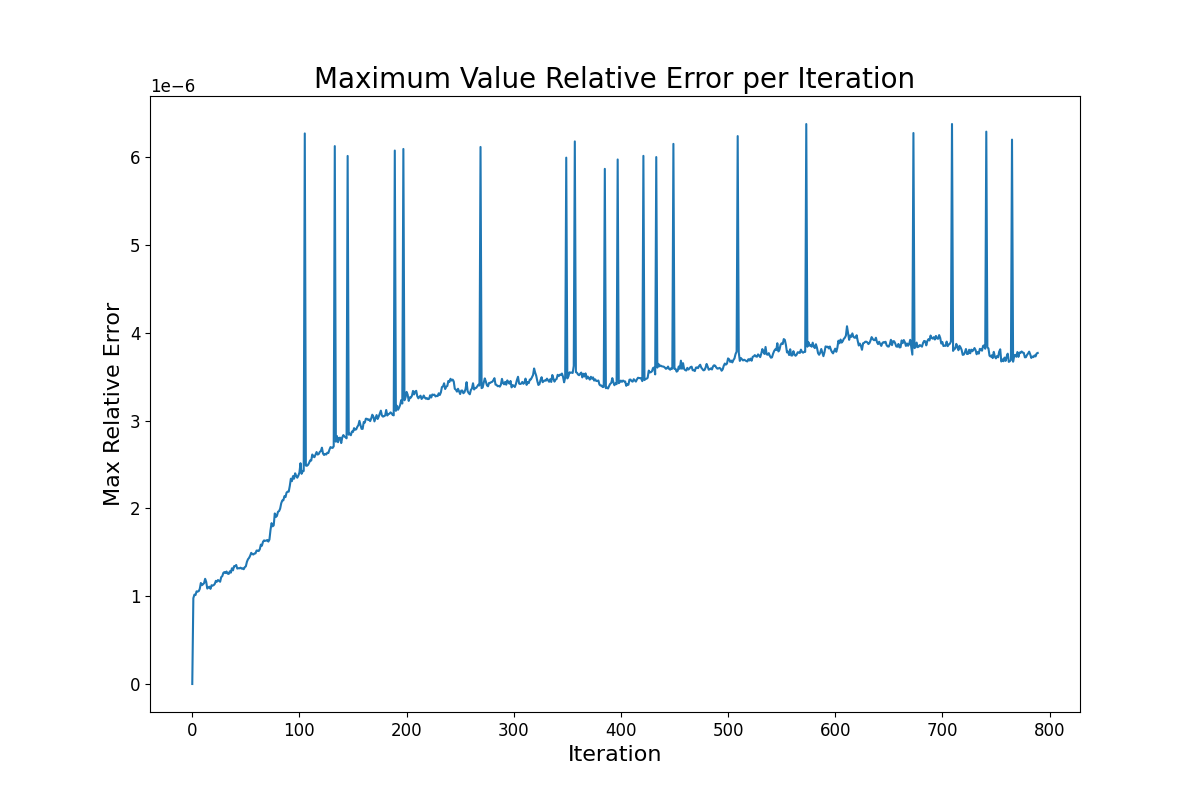}
        \caption{Relative error with $\epsilon = 1\mathrm{e}{-3}$.}
        \label{fig:rel_error_trivial_1e-3} 
    \end{subfigure}
    \begin{subfigure}[b]{0.45\textwidth}
        \includegraphics[width=\textwidth]{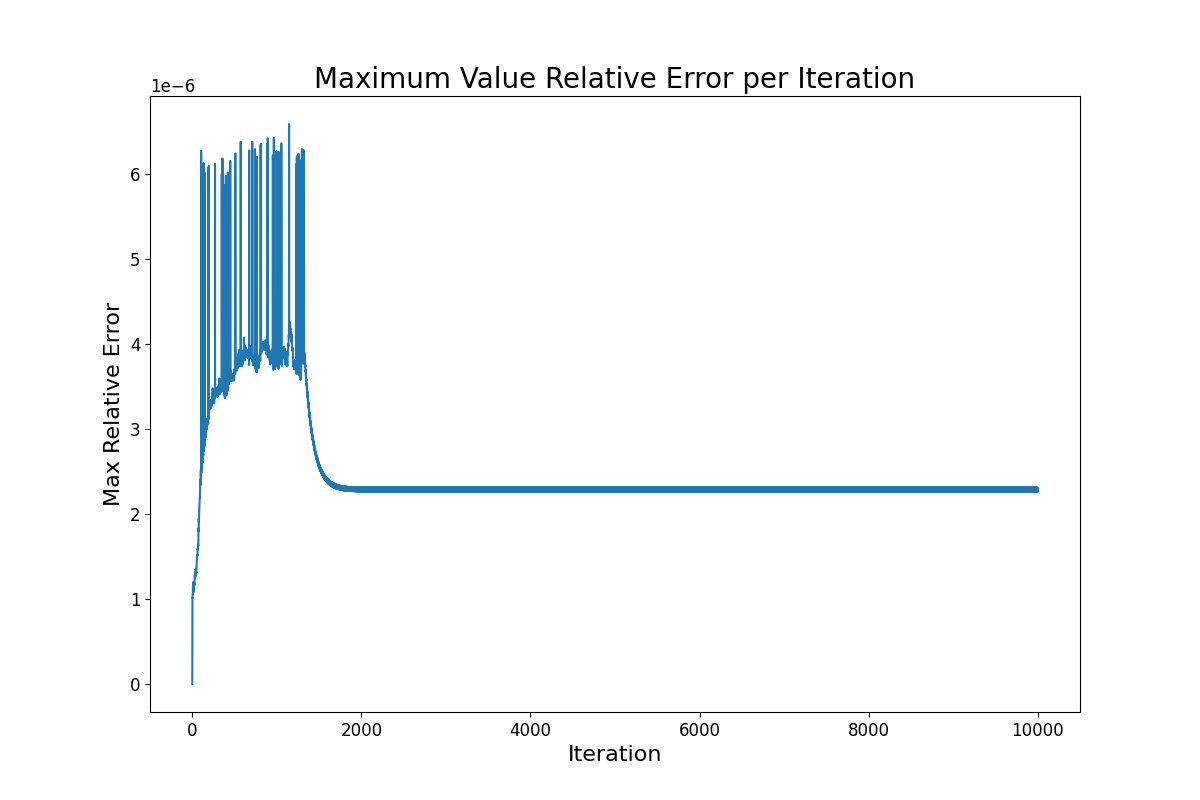}
        \caption{Relative error with $\epsilon = 1\mathrm{e}{-6}$.}
        \label{fig:rel_error_1e-6}
    \end{subfigure}
    \caption{Maximum relative error between DPBVI and PBVI value functions across iterations in the Two-State Noisy-Sensor environment under two convergence thresholds.}
    \label{fig:rel_error}
\end{figure}

Across both thresholds, we observe very small value differences throughout the run, confirming that DPBVI effectively recovers the same solution as PBVI. Under the standard $\epsilon = 1\mathrm{e}{-3}$ threshold (Figure~\ref{fig:rel_error_trivial_1e-3}), the error increases gradually and exhibits intermittent spikes. These spikes are consistent with categorical projection error, floating-point precision limits, and bootstrapping. 

To test whether the value function discrepancy is a function of the number of backups---suggesting compounding error or a design flaw in DPBVI---we tightened the convergence threshold to $\epsilon = 1\mathrm{e}{-6}$ and set the iteration limit to $10,000$. As shown in Figure~\ref{fig:rel_error_1e-6}, the error initially increases, peaks below $3\times 10^{-4}$, and then steadily decreases, plateauing near $5\times 10^{-5}$. This supports the hypothesis that the observed error stems from numerical and projection effects rather than compounding error over time. Notably, neither PBVI nor DPBVI fully converges under this stricter criterion, yet the discrepancy remains small and bounded.

\begin{figure}
    \centering
    \includegraphics[width=0.5\textwidth]{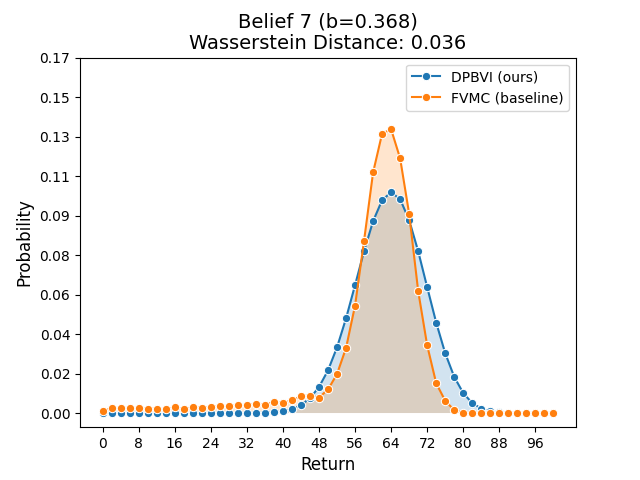}
    \caption{Comparison between DPBVI's learned categorical return distribution (blue) and empirical FVMC distributions (orange) for the belief with probability of $0.368$ of being in state $s_0$. The normalized Wasserstein-1 distance is reported. The learned and empirical distributions closely align, indicating accurate recovery of the return distribution.}
    \label{fig:dist_compare_b_7}
\end{figure}

To validate that DPBVI learns the correct return distributions, we compare its categorical estimates against empirical FVMC return distributions (Section~\ref{appendix_sec:fvmc}). Figure~\ref{fig:dist_compare_b_7} shows the comparison for the belief assigning probability $0.368$ to state $s_0$. The mode of both distributions align closely, indicating that DPBVI accurately recovers the underlying return distribution. The longer left tail observed in the FVMC distribution arises naturally from the FVMC procedure: if the belief is first encountered late in a trajectory, even optimal behavior can yield a smaller remaining return. In the true infinite-horizon setting the minimum attainable return is zero (though extremely unlikely), but the finite-horizon FVMC approximation occasionally produces low-return samples, explaining this tail. 

The normalized Wasserstein-1 distance between the two distributions is approximately $0.036$, which is the largest discrepancy observed across all beliefs. A full comparison of return distributions for every belief point—including Wasserstein-1 distances—is provided in Section~\ref{appendix_sec:fvmc}. Additional implementation details, environment descriptions, and full FVMC results are provided in Appendix~\ref{appendix_sec:envs}--\ref{appendix_sec:fvmc}.

\section{Conclusion}

In this paper, we presented the formal extension of DistRL in partially observable domains. We proved that the partially observable distributional Bellman operator $\widetilde{\mathcal{T}}_{PO}^\pi$ and its optimality counterpart $\widetilde{\mathcal{T}}^*_{PO}$ are $\gamma$-contractions in the supremum $p$-Wasserstein metric, extending classical DistRL convergence guarantees to POMDPs.

We introduced $\psi$-vectors as the distributional analogs of $\alpha$-vectors, proving that under risk-neutral control, the optimal distributional value function in a POMDP admits a finite PWLC representation in the Wasserstein space. This establishes a direct distributional generalization of classical POMDP theory.

Building on these foundations, we described DPBVI, a distributional variant of PBVI. DPBVI applies the partially observable distributional Bellman optimality operator at a fixed set of belief points and recovers the same risk-neutral solution as PBVI while maintaining full return distributions. Our experiments confirm this alignment and demonstrate that distributional planning does not compromise solution quality in the risk-neutral regime. 

Although our results focus on the risk-neutral setting with known models, modeling the full return distribution under partial observability is a necessary precursor to scalable risk-sensitive control. Prior work has shown that dynamic programming formulations for certain risk-sensitive operators (e.g., CVaR) can converge to suboptimal solutions for the corresponding static risk objective \citep{cvar_suboptimal_convergence}, implying that new operators and finite-representation results are needed---an important direction for future work.

Two extensions follow naturally. First, incorporating unknown dynamics \citep{pac_unknown_dynamics_mdp,ba_pomdp} would align theory with the practical use of world model approaches, where latent states approximate beliefs \citep{storm,dreamerv3}. Second, integrating risk-sensitive objectives within the distributional POMDP framework would enable safer planning in safety-critical domains where environment models are typically unknown and online interaction is infeasible. 


\appendix

\section{Proofs}
\label{appendix_sec:proofs}

\subsection{Proof of Theorem~\ref{thrm:pod_operator}}
\label{appendix_sec:pod_operator_proof}

\evalOpTheorem*

\begin{proof}
    We prove contraction of the partially observable distributional Bellman operator $\widetilde{\mathcal{T}}^\pi_{PO}$ by lifting the POMDP $\mathcal{P}$ to its equivalent fully observable belief MDP $\mathcal{M}$, where states are beliefs $b \in \Delta$, actions are $a \in \mathcal{A}$, the reward distribution is $P_R(\cdot \mid b, a) := \mathbb{E}_{s \sim b}[P_R(\cdot \mid s, a)]$, and transitions are governed by the belief update $\tau(b, a, o')$ with probability 
\[
P(o' \mid b, a) = \sum_{s, s'} b(s) T(s, a, s') \Omega(o' \mid s', a).
\]

    Assume the following:
    \begin{itemize}
        \item $\Delta$ is a Borel space (as it is a subset of a finite-dimensional simplex).
        \item $\mathcal{A}$ is finite
        \item The reward distribution $P_R(\cdot \mid b, a) \in \mathfrak{P}_p(\mathbb{R}) \quad \forall (b, a) \in \Delta \times \mathcal{A}$
        \item The belief transition kernel $\tau(\cdot \mid b, a)$ is measurable
        \item The policy $\pi: \Delta \to \mathcal{A}$ is fixed and measurable
    \end{itemize}
    
    Let $\eta, \eta' \in \mathfrak{P}_p(\mathbb{R})^\Delta$ be two distribution-valued value functions.
    Define the supremum $p$-Wasserstein metric as:
    \[
    \bar{w}_p(\eta, \eta') := \sup_{b \in \Delta} W_p(\eta(b), \eta'(b)), \quad \forall p \in [1, \infty).
    \]
    
    Fix $b \in \Delta$ and let $(G_b, G_b')$ be an optimal coupling of $\eta(b)$ and $\eta'(b)$. Now sample the random transition:
    \begin{itemize}
        \item $A \sim \pi(\cdot \mid b)$
        \item $R \sim P_R(\cdot \mid b, A)$
        \item $B' \sim \tau(\cdot \mid b, A)$
    \end{itemize}
    independently of the couplings $G(\cdot), G'(\cdot)$.
    Define the random variables:
    \[
    \tilde{G}(b) := R + \gamma G(B'), \quad 
    \tilde{G}(b') := R + \gamma G'(B'),
    \]
    By definition $\tilde{G}_b$ has distribution $\left(\widetilde{\mathcal{T}}^\pi_{PO} \eta\right)\!\!\Big(b\Big)$ and $\tilde{G}'$ has distribution $\left(\widetilde{\mathcal{T}}^\pi_{PO} \eta'\right)\!\!\Big(b\Big)$.
    Then:
    \[
    \begin{aligned}
        W_p^p(\widetilde{\mathcal{T}}^\pi_{PO}\eta(b), \widetilde{\mathcal{T}}^\pi_{PO}\eta'(b))
        &\leq \mathbb{E} \left[ |\tilde{G} - \tilde{G'}|^p \mid B = b \right] \\
        &= \mathbb{E} \left[ |R + \gamma G(B') - (R + \gamma G'(B'))|^p \mid B = b \right] \\
        &= \gamma^p \, \mathbb{E} \left[|G(B') - G'(B')|^p \mid B = b \right] \\
        &= \gamma^p \, \mathbb{E}_{B' \mid B = b} \left[ \mathbb{E} \left[|G(B') - G'(B')|^p \mid B = b,\; B'\right] \right] \\
        &\le \gamma^p \, \sup_{b' \in \Delta} \mathbb{E} \left[|G(b') - G'(b')|^p \right] \\
        &= \gamma^p \, \sup_{b' \in \Delta} W_p^p\left( \eta(b'), \eta'(b') \right) \\
        &= \gamma^p \, \overline{w}_p^p\left( \eta, \eta' \right) \\
    \end{aligned}
    \]
    Taking the $p$th root on both sides yields
    \[
    W_p(\widetilde{\mathcal{T}}^\pi_{PO}\eta(b), \widetilde{\mathcal{T}}^\pi_{PO}\eta'(b)) \le \gamma \, \overline{w}_p\left( \eta, \eta' \right). \\
    \]
    Taking the supremum over $b \in \Delta$, we conclude:
    \[
    \bar{w}_p\left(\widetilde{\mathcal{T}}^\pi_{PO} \eta, \widetilde{\mathcal{T}}^\pi_{PO} \eta'\right)
    \leq \gamma \bar{w}_p(\eta, \eta'),
    \]
    as desired.
\end{proof}

\subsection{Proof of Theorem~\ref{thrm:pod_optimality_operator}}
\label{appendix_sec:pod_optimality_operator_proof}

\optOpTheorem*

\begin{proof}
We lift the POMDP \(\mathcal{P}\) to its equivalent belief MDP \(\mathcal{M}\), where each belief \(b \in \Delta\) serves as a fully observable state. The transition dynamics \(\tau(b, a, o')\) of the belief MDP are Markovian, and the reward distribution is $P_R(\cdot \mid b, a) := \mathbb{E}_{s \sim b}[P_R(\cdot \mid s, a)]$. 

From this point forward, we use $b \in \Delta$ to denote the state of the belief MDP. We assume the following conditions hold in the belief MDP:
\begin{itemize}
    \item The belief state space $\Delta$ is compact and convex
    \item The reward distribution is $P_R(\cdot \mid b, a)$ is continuous in $b\; \forall a \in \mathcal{A}$
    \item The reward distribution $P_R(\cdot \mid b, a) \in \mathfrak{P}_p(\mathbb{R}) \quad \forall (b, a) \in \Delta \times \mathcal{A}$
    \item The belief update function $\tau(b, a, o')$ is continuous in $b$ for all $a \in \mathcal{A}$ and $o' \in \mathcal{O}$
    \item The action space $\mathcal{A}$ is finite
    \item The policy space $\Pi$ is fixed and measurable
    \item There exists a unique optimal policy $\pi^*$
\end{itemize}

Let \(\eta^{\pi^*}\) denote the return distribution induced by the optimal policy. The partially observable distributional Bellman optimality operator \(\widetilde{\mathcal{T}}^*_{PO}\) can be interpreted as a greedy update rule over return distributions in the belief MDP.

Define $Q_\eta(b, a) := \mathbb{E}_{Z \sim \eta(b, a)} [Z]$ as the expected return under the return distribution $\eta$.
The \emph{action gap} at state $b \in \Delta$ is defined as
$$\text{GAP}(Q_\eta, b) := \min\{ Q_\eta(b, a^*) - Q_\eta(b, a) : a^*, a \in \mathcal{A}, a^* \neq a, Q_\eta(b, a^*) := \max_{a' \in \mathcal{A}} Q_\eta(b, a') \}$$
The global action gap is then defined as
$$\text{GAP}(Q_\eta) := \inf_{b \in \Delta}\text{GAP}(Q_\eta, b)$$

We now justify that \(Q_\eta(b, a)\) is continuous in \(b\) for all \(a \in \mathcal{A}\). The return distribution $\eta(b, a)$ is defined recursively from the reward distribution and belief transitions. Under our assumptions that the reward distribution $P_R(\cdot \mid b, a)$ and belief update $\tau(b, a, o')$ are continuous in b, it follows that \(Q_\eta(b, a) = \mathbb{E}_{Z \sim \eta(b, a)}[Z]\) is continuous in $b$ as well. Since the action space is finite, the maximum and second maximum of $\{Q_\eta(b, a)\}_{a \in \mathcal{A}}$ are continuous in $b$, so $\text{GAP}(Q_\eta, b)$ is also continuous in $b$.

By the Extreme Value Theorem, since $\text{GAP}(Q_\eta, b)$ is continuous on the compact state space $\Delta$, the infimum is attained and equal to the minimum. Since the optimal policy is unique, $\text{GAP}(Q_\eta, b) > 0 \; \forall b \in \Delta$, thus $\text{GAP}(Q_\eta) > 0$.

Fix $\epsilon = \dfrac{1}{2}\text{GAP}(Q^*)$. The standard Bellman optimality operator is known to be a $\gamma$-contraction under the $L^\infty$ norm. Consequently, $\exists k \in \mathbb{N}$ such that
$$||Q_{\eta_k} - Q^*||_\infty < \epsilon \qquad \forall k \geq K$$

For any fixed $b$, let $a^*$ be the optimal action in that state. Then for any $a \neq a^*$, we have that
$$
\begin{aligned}
    Q_{\eta_k}(b, a^*) &\geq Q^*(b, a^*) - \epsilon\\
    &\geq Q^*(b, a) + \text{GAP}(Q^*) - \epsilon\\
    &> Q_{\eta_k}(b, a) + \text{GAP}(Q^*) - 2\epsilon\\
    &= Q_{\eta_k}(b, a)
\end{aligned}
$$
Thus, any greedy selection rule applied to $\eta_k$ will yield the optimal policy $\pi^* \; \forall k \geq K$. From this point onward, the Bellman updates correspond to evaluation under a fixed policy $\pi^*$, and we may invoke Theorem~\ref{thrm:pod_operator} with the initial return distributions $\eta_0 = \eta_k$ to conclude that $\eta_k \rightarrow \eta^{\pi^*}$. Hence, $\widetilde{\mathcal{T}}^*_{PO}$ is a $\gamma$-contraction and converges to the unique fixed point under the supremum $p$-Wasserstein metric.
\end{proof}

\subsection{Proof of Theorem~\ref{thrm:risk_neutral_psi_pwlc}}
\label{appendix_sec:risk_neutral_psi_pwlc}

\psiPWLC*
    
\begin{proof}
    By Theorem~\ref{thrm:pod_optimality_operator}, there exists a unique optimal distributional value function $Z^*$. Let $\Gamma^*$ be the set of $\psi$-vectors that are optimal for some belief. For any $\Psi \in \Gamma^*$, define its associated $\alpha$-vector by
    \[
        \alpha(\Psi) \triangleq (\mathbb{E}[\psi^s])_{s \in \mathcal{S}}.
    \]
    Fix a belief $b$ and let $\Psi^*(b) \in \Gamma^*$ be any maximizer of $\langle \Psi, b \rangle = \sum_s b(s) \mathbb{E}[\psi^s],$
    where $\psi^s$ is the $s$th component of $\Psi$. Taking expectations yields
    \[
        \mathbb{E}[Z^*(b)] = \sum_s b(s) \mathbb{E}[\psi^{*,s}] = \alpha(\Psi^*(b)) \cdot b,
    \]
    where $\psi^{*,s}$ is the $s$th component of $\Psi^*(b)$. Since $\Psi^*(b)$ is chosen to maximize expected return,
    \[
        \alpha(\Psi^*(b)) \cdot b = \max_{\Psi \in \Gamma^*} \alpha(\Psi) \cdot b.
    \]
    Let $\zeta^* = \{ \alpha(\Psi) : \Psi \in \Gamma^* \}$. Then the optimal value function satisfies
    \[
    \begin{aligned}
        V^*(b) &= \mathbb{E}[Z^*(b)]
        &= \max_{\alpha \in \zeta^*} \alpha \cdot b.
    \end{aligned}
    \]
    By Sondik's result \citet{sondik_infinite_horizon}, $V^*$ is PWLC and can be represented by a finite set of $\alpha$-vectors. Therefore, $\zeta^*$, and hence, $\Gamma^*$, may be taken to be finite. Since $Z^*(b)$ is the belief-linear mixture of the corresponding $\psi$-vectors, the optimal distributional value function admits a finite PWLC representation with respect to the supremum $p$-Wasserstein metric.
\end{proof}

\section{Environments}
\label{appendix_sec:envs}

We evaluate DPBVI on two variants of a small POMDP environment adapted from \citep[Section 17.5]{ai_modern_approach}. Both environments share the same transition and observation structure but differ in their reward models.


\paragraph{Two-State Noisy-Sensor (Deterministic Rewards)} This is a simple domain with two states, $\{ s_0, s_1 \}$, and a noisy sensor. The two actions are \emph{Go} and \emph{Stay}. \emph{Go} changes states with probability $0.9$ and \emph{Stay} stays in the same state with probability $0.9$. The agent receives reward $1.0$ whenever it is in state $s_1$ and $0$ otherwise. With probability $0.6$ the sensor reports the correct state. We set the discount factor $\gamma = 0.99$ and $\mathcal{B}$ to include twenty evenly spaced beliefs between complete belief of being in either state (i.e., $|\mathcal{B}| = 20$).

\paragraph{Two-State Noisy-Sensor (Stochastic Rewards)} This environment mirrors the transition and observation structure above but uses stochastic rewards. In state $s_0$, rewards follow a Gaussian distribution $\mathcal{N}(1.0, 0.1)$ (truncated to $[0.0,1.3]$), while for state $s_1$, rewards follow an exponential distribution with rate $\lambda = 10)$ (also truncated to $[0.0, 1.3]$). We discretize each reward distribution using forty categorical atoms over this shared support. PBVI receives only the expected reward for each state, while DPBVI learns the full return distributions via their categorical approximations.

\section{Experimental Setup}
\label{appendix_sec:experiment_setup}

We disable belief-point expansion in both environments and fix a finite set $\mathcal{B}$ of belief states as described in Section \ref{appendix_sec:envs}. PBVI is implemented as in \citep{pbvi}, while DPBVI follows the backup procedure from Section \ref{sec:dpbvi}. Return distributions in DPBVI use 51 categorical atoms, with each $\hat{\psi}$-vector initialized to the uniform distribution, and we adopt a distribution support of $[0, 100]$. 

Upon convergence, we compare the expected values of DPBVI’s return distributions against PBVI’s scalar value function, using identical discount factors and stopping criteria in both algorithms. To assess distributional accuracy, we compute empirical FVMC return distributions for each belief and report the normalized Wasserstein-1 distance between the FVMC baseline and the learned categorical distribution.

\section{First-Visit Monte Carlo}
\label{appendix_sec:fvmc}

To validate the return distributions learned by DPBVI, we compute empirical first-visit Monte Carlo (FVMC) estimates for each belief point. For each belief $b$, we sample $n = 5{,}000$ trajectories from the true POMDP dynamics and record the discounted return from the first visit to belief $b$ onward. To approximate the infinite-horizon setting, each trajectory is truncated to $T = 1{,}000$ steps, since $\gamma^{1{,}000} \approx 0.0$ and future rewards contribute negligibly to the return. The empirical return distribution is obtained from these samples and compared to the corresponding DPBVI categorical return distribution (after projection onto the same support). FVMC is used only as a baseline comparison and does not influence learning. 

\begin{figure}
    \centering
    \includegraphics[width=1.0\textwidth]{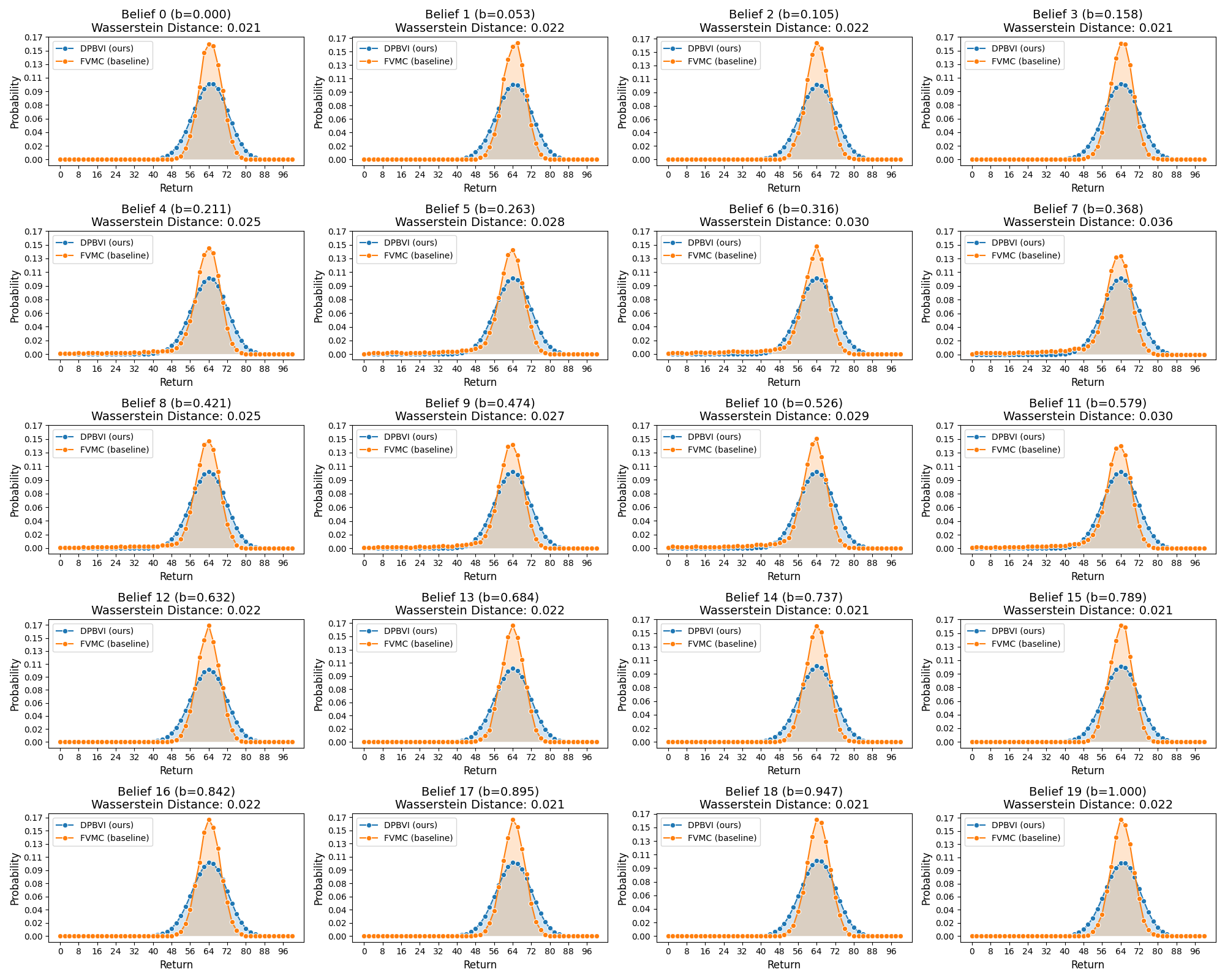}
    \caption{Comparison between DPBVI's learned categorical return distributions (blue) and empirical FVMC distributions (orange) for each belief point in the Two-State Noisy-Sensor: Stochastic Rewards environment. The normalized Wasserstein-1 distance is reported for each belief. Belief values indicate the probability of being in state $s_0$. Across all beliefs, DPBVI closely matches the empirical distributions, validating the correctness of the learned distributions.}
    \label{fig:dpbvi_vs_fvmc_returns}
\end{figure}



\bibliography{main}
\bibliographystyle{rlj}


%
%

\end{document}